\algrenewcommand\algorithmicindent{1em}
\newcommand{\email}[1]{\href{mailto:#1}{#1}}
\renewcommand{\Vec}[1]{\bm{#1}}
\newcommand{\Mat}[1]{\bm{#1}}
\newcommand{\inv}{^{-1}}
\newcommand{\T}{^\top}
\newcommand{\SFN}[1]{{\lVert{#1}\rVert_\mathrm{F}^2}}
\newcommand{\dataset}[1]{\mathcal{#1}}
\newcommand{\argmin}[1]{\underset{#1}{\operatorname{argmin}}~}
\newcommand{\ReLU}{\operatorname{ReLU}}
\newcommand{\onehot}{\operatorname{onehot}}
\newcommand{\X}{\Mat{X}}
\newcommand{\Y}{\Mat{Y}}
\newcommand{\R}{\Mat{R}}
\newcommand{\Q}{\Mat{Q}}
\newcommand{\W}{\Mat{W}}
\newcommand{\Wh}{\hat{\Mat{W}}}
\newcommand{\I}{\Mat{I}}
\newcommand{\K}{\Mat{K}}
\newcommand{\Ximg}{\Mat{\mathcal{X}}}
\newcommand{\f}{\Vec{f}}
\newcommand{\Wbackbone}{\Mat{W}_{\text{backbone}}}
\newcommand{\Wrand}{\Mat{W}_{\text{buffer}}}
\newcommand{\MU}{\boldsymbol{\mu}}
\newcommand{\NU}{\boldsymbol{\nu}}
\newcommand{\SIGMA}{\boldsymbol{\sigma}}
\newtheorem{theorem}{Theorem}
\begin{document}

\title{Online Analytic Exemplar-Free Continual Learning with Large Models for Imbalanced Autonomous Driving Task}

\author{
	\thanks{Copyright~\copyright~2024 IEEE. Personal use of this material is permitted. However, permission to use this material for any other purposes must be obtained from the IEEE by sending a request to \email{pubs-permissions@ieee.org}.}

    Huiping Zhuang\orcidlink{0000-0002-4612-5445}\thanks{
        Huiping Zhuang (e-mail: \email{hpzhuang@scut.edu.cn}),
        Kai Tong (e-mail: \email{wikaitong@mail.scut.edu.cn}), and
        Ziqian Zeng (e-mail: \email{zqzeng@scut.edu.cn})
        are with the Shien-Ming Wu School of Intelligent Engineering, South China University of Technology, Guangdong 510641, China.
    },
    \and
    Di Fang\orcidlink{0009-0004-8135-2354}\thanks{
        Di Fang (e-mail: \email{fti@mail.scut.edu.cn}) and Cen Chen (e-mail: \email{chencen@scut.edu.cn})
        are with the School of Future Technology, South China University of Technology, Guangdong 510641, China.
        Cen Chen is also with the Pazhou Laboratory, Guangzhou 510330, China.
    },
    \and
    Kai Tong\orcidlink{0009-0001-6073-8918},
    \and
    Yuchen Liu\orcidlink{0009-0001-3831-1168}\thanks{
        Yuchen Liu (e-mail: \email{liuyuchen@connect.hku.hk}) is with the Department of Mechanical Engineering, the University of Hong Kong, Hong Kong 999077, China.
    },\\
    \and
    Ziqian Zeng\orcidlink{0000-0003-0060-7956}$^{*}$,
    \and
    Xu Zhou\orcidlink{0000-0002-0764-0620}\thanks{Xu Zhou (e-mail: \email{zhxu@hnu.edu.cn}) is with the Department of Information Science and Engineering, Hunan University, Hunan 410082, China.},
    \and
    Cen Chen\orcidlink{0000-0003-1389-0148},~\IEEEmembership{Senior Member,~IEEE}
    \thanks{$^{*}$Corresponding author: Ziqian Zeng.}
}

\maketitle
\vspace{-3em}

\begin{abstract}
    In autonomous driving, even a meticulously trained model can encounter failures when facing unfamiliar scenarios. One of these scenarios can be formulated as an online continual learning (OCL) problem. That is, data come in an online fashion, and models are updated according to these streaming data. Two major OCL challenges are catastrophic forgetting and data imbalance. To address these challenges, we propose an Analytic Exemplar-Free Online Continual Learning algorithm (AEF-OCL). The AEF-OCL leverages analytic continual learning principles and employs ridge regression as a classifier for features extracted by a large backbone network. It solves the OCL problem by recursively calculating the analytical solution, ensuring an equalization between the continual learning and its joint-learning counterpart, and works without the need to save any used samples (i.e., exemplar-free). Additionally, we introduce a Pseudo-Features Generator (PFG) module that recursively estimates the mean and the variance of real features for each class. It over-samples offset pseudo-features from the same normal distribution as the real features, thereby addressing the data imbalance issue. Experimental results demonstrate that despite being an exemplar-free strategy, our method outperforms various methods on the autonomous driving SODA10M dataset. Source code is available at \url{https://github.com/ZHUANGHP/Analytic-continual-learning}.
\end{abstract}

\begin{IEEEkeywords}
    Autonomous driving, continual learning, image classification, imbalanced dataset, online learning.
\end{IEEEkeywords}

\section{Introduction}
    \IEEEPARstart{A}{utonomous} driving technology \cite{AutonomousDriving2023IntelligentVehicles,Humanlikedriving2018TransactionsonVehicularTechnology, ASurveyofAutonomousDriving2020IEEEAccess, MTYGNN_Xiaofeng_TITS2022} is currently grappling with the complex and diverse challenges presented by real-world scenarios. These scenarios are marked by a wide range of factors, including varying weather conditions like heavy snowfall, as well as different road environments \cite{MGSTC_Chen_AAAI2019, MGSTC_Chen_TKDD2020}. Even well-trained autonomous driving models often struggle to navigate through these unfamiliar circumstances.

    The advent of large-scale models \cite{BERT2019NAACL}, characterized by their extensive parameter counts and training data, has led to substantial improvements in the feature extraction capabilities of these models. This increase in the parameter number has enabled the utilization of various downstream applications, offering enhanced feature extraction capabilities crucial for high-accuracy tasks such as classification, segmentation, and detection to aid autonomous driving. However, despite these advancements, the goal of achieving efficient and dynamic learning in complex autonomous driving environments and scenes remains unachieved.

    One of these efficient and dynamic learning challenges encountered in autonomous driving can be formulated as a continual learning (CL) problem \cite{li2018LWF,rebuffi2017icarl} in an online setting. That is, models are updated according to these streaming data in an online fashion. However, this inevitably leads to the so-called \textit{catastrophic forgetting} \cite{CF_Bower_PLM1989, CF_Ratcliff_PR1990}, where models lose grip of previously learned knowledge when obtaining new information. Furthermore, the online data streaming manner often accompanies a \textit{data imbalance} issue \cite{OA3_Zhang_KDD2018}, with information in different categories containing varying data counts in general. For instance, in the autonomous driving dataset SODA10M \cite{han2021soda10m}, the \textit{Tricycle} category contains just 0.3\% of the training set, whereas the \textit{Car} category accounts for 55\%. This imbalance issue exacerbates the forgetting problem, rendering more difficult learning of continuous knowledge.

    To address the above streaming task, the online continual learning (OCL) has been introduced. OCL methods belong to the CL category with an online constraint, striving to preserve old knowledge while learning new information from streaming data. The OCL is more challenging as the streaming data can only be updated once (i.e., one epoch). Like the CL, existing OCL methods can be roughly categorized into two groups, namely replay-based and exemplar-free methods. The replay-based OCL keeps a small subset of trained samples and reduces catastrophic forgetting by mixing them during the following training tasks. Replay-based methods usually obtain good performance but invade data privacy by keeping samples.

    The exemplar-free OCL, on the other hand, tries to avoid catastrophic forgetting while adhering to an additional exemplar-free constraint. That is, no trained samples are stored for the following training tasks. This category of OCL is more challenging but has attracted increasing attention. Among the real-world autonomous driving scenarios, exemplar-free OCL methods are often needed, driven by concerns related to online sample flow, data privacy, and algorithmic simplicity. However, the performance of existing exemplar-free methods remains inadequate, especially in the online streaming setting.

    To tackle the catastrophic forgetting problem and the data imbalance issue, in this paper, we propose an Analytic Exemplar-Free Online Continual Learning algorithm (AEF-OCL). The AEF-OCL adopts an analytic learning approach \cite{brmp2021}, which replaces the back-propagation with a recursive least-squares (RLS) like technique. In traditional scenarios, the combination of RLS and OCL has demonstrated promising primary results \cite{zhuang2022acil,zhuang2023gkeal}. The contributions of our work are summarized as follows:
    \begin{itemize}[leftmargin=1em]
        \item We introduce the AEF-OCL, a method for OCL that eliminates the need for exemplars. The AEF-OCL offers a recursive analytical solution for OCL and establishes an equivalence to its joint-learning counterpart, ensuring that the model firmly retains previously learned knowledge. This approach effectively addresses the issue of catastrophic forgetting without storing any past samples.
        \item We introduce a Pseudo-Features Generator (PFG) module. This module conducts a recursive calculation of task-specific data distribution and generates pseudo-data by considering the distribution of the current task's feature to tackle the challenge of data imbalance.
        \item Theoretically, we demonstrate that the AEF-OCL achieves an equivalence between the CL structure and its joint-learning counterpart by adopting all the data.
        \item We apply the AEF-OCL by adopting a large-scale pre-trained model to address the CL tasks in autonomous driving. Our experiments on the SODA10M dataset \cite{han2021soda10m} demonstrate that the AEF-OCL performs well in addressing OCL challenges within the context of autonomous driving.
    \end{itemize}

\section{Related Works}\label{sec:related_works}
    In this section, we first review the details of the autonomous driving dataset SODA10M and its metric. Subsequently, we survey commonly seen CL methods, including replay-based and exemplar-free ones. Then, we summarize the OCL methods, which are mainly replay-based approaches. Finally, we review CL methods designed for the data imbalance issue.

    \subsection{The SODA10M dataset}
        In light of the popularity of autonomous driving technology, datasets pertinent to this field have obtained significant attention. As a notable dataset in this area, the SODA10M dataset \cite{han2021soda10m} comprises 10 million unlabeled images and 20,000 labeled images captured from vehicular footage across four cities. In this study, we restrict our focus to the labeled images to examine OCL tasks. Building upon the SODA10M labeled images, the CLAD \cite{verwimp2023clad} introduces a CL benchmark for autonomous driving. This approach partitions the labeled images of the SODA10M dataset into six tasks, distributed over three days and three nights based on the capture time. Models are trained sequentially on these six tasks, with verification conducted after each task.

    \subsection{Continual Learning Methods}
        In the realm of CL methods, we can broadly classify them into two distinct categories: replay-based and exemplar-free strategies. The former, replay-based techniques, utilize stored historical samples throughout the training process as a countermeasure to the catastrophic forgetting issue, thereby enhancing the overall performance. On the other hand, the exemplar-free methods aim to comply with an additional constraint that avoids the retention of trained samples for subsequent training stages. This type of OCL presents a greater challenge, yet it has been garnering increasing interest.

    \subsubsection{Replay-based CL}
        The paradigm of replay-based CL, which enhances the model's capacity to retain historical knowledge through the replay of past samples, has been increasingly recognized for its potential to mitigate the issue of catastrophic forgetting. The pioneering work by the iCaRL \cite{rebuffi2017icarl} marks the inception of this approach, leading to the subsequent development of numerous methods due to its substantial performance improvements. \citet{EEIL_2018_ECCV} propose a novel approach that incorporates a cross-distillation loss achieved via a replay mechanism that combines two loss functions: cross-entropy loss for learning new classes and distillation loss to preserve previously acquired knowledge of old classes.
        In a deviation from the conventional softmax layer, the LUCIR \cite{LUCIR_Hou_CVPR2019} introduces a cosine-based layer. The PODNet \cite{douillard2020podnet} implements an efficient space-based distillation loss to counteract forgetting, with a particular focus on significant transformations, which has yielded encouraging results.
        The FOSTER \cite{FOSTER2022ECCV} employs a two-stage learning paradigm that initially expands the network size, and subsequently reduces it to its original dimensions. The AANets \cite{AANet_2021_CVPR} incorporates a stable block and a plastic block to strike a balance between stability and plasticity. In general, replay-based CL achieves adequate results, but due to issues of data privacy and training costs, it is not very suitable for practical applications.

    \subsubsection{Exemplar-free CL}
        Exemplar-free CL methods do not require storing historical samples, making them more suitable for privacy-focused applications like autonomous driving. Exemplar-free CL can be roughly categorized into three branches: regularization-based CL, prototype-based CL, and the recently emerged analytic CL (ACL).

        \par \textit{Regularization-based CL} creates an innovative loss function to encourage the model to re-engage with previously acquired knowledge to prevent the model from forgetting. Methods such as the less-forgetting learning \cite{LessForgetting_2016_arXiv} and the LwF \cite{li2018LWF} introduce knowledge distillation \cite{KD_Hinton_arXiv2015} into their loss function to prevent catastrophic forgetting caused by activation drift. To prevent the drift of the important weights, the EWC \cite{EWC2017nas} introduces regularization to the network parameters, employing a diagonal approximation of the Fisher information matrix to encapsulate the a priori importance, and the R-EWC \cite{liu2018rn} endeavors to discover a more appropriate alternative to the Fisher information matrix. However, when the number of tasks is large, especially in OCL scenarios, regularization-based methods still face a serious catastrophic forgetting problem.

        \par \textit{Prototype-based CL} has emerged as a viable solution to the catastrophic forgetting problem by maintaining prototypes for each category, thereby ensuring new and old categories do not share overlapping representations. For instance, the PASS \cite{Zhu_2021_CVPR} differentiates prior categories through the augmentation of feature prototypes. In a similar vein, the SSRE \cite{Zhu_2022_CVPR} introduces a prototype selection mechanism that incorporates new samples into the distillation process, thereby emphasizing the dissimilarity between the old and new categories. The ProCA \cite{ProCA_Lin_ECCV2022} adapts the source model to a class-incremental unlabeled target domain. Furthermore, the FeTrIL \cite{Petit_2023_WACV} offers another innovative solution to mitigate forgetting. It generates pseudo-features for old categories, leveraging new representations. However, a major challenge to the prototype-based CL is that old prototypes may be inaccurate during the CL process. Several approaches \cite{PRAKA_Shi_ICCV2023, ESSA_Cheng_TCSVT2024, NAPA-VQ_Tamasha_ICCV2023} are proposed to address this issue.

        \par \textit{ACL} is a recently developed exemplar-free approach inspired by pseudoinverse learning \cite{GUO2004101}. In ACL, classifiers are trained using the RLS-like technique to generate a closed-form solution to overcome the inherent drawbacks associated with back-propagation, such as the gradient vanishing/exploding, divergence during iterative processes, and long training epochs. The ACIL \cite{zhuang2022acil} restructures CL programs into a recursive analytic learning process, eliminating the necessity of storing samples through the preservation of the correlation matrix. The GKEAL \cite{zhuang2023gkeal} focuses on few-shot CL scenarios by leveraging a Gaussian kernel process that excels in zero-shot learning. The RanPAC \cite{RanPAC_McDonnell_NeurIPS2023} just simply replaces the recursive classifier of the ACIL with an iterative one. To enhance the ability of the classifier, the DS-AL \cite{Zhuang_DSAL_AAAI2024} introduces another recursive classifier to learn the residue, and the REAL \cite{REAL_He_arXiv2024} introduces the representation enhancing distillation to boost the plasticity of backbone networks. The AFL \cite{AFL_Zhuang_arXiv2024} extends the ACL to federated learning, transitioning from temporal increment to spatial increment, and \citet{LSSE_Liu_ICLR2024} apply ACL to reinforcement learning. The ACL is an emerging CL branch, exhibiting strong performance due to its equivalence between CL and joint-learning, in which all the data are adopted altogether to train the model. Our AEF-OCL belongs to ACL. Compared with the latest work, a PFG module is applied to solve the data imbalance problem. Our AEF-OCL incorporates ACL methods into OCL and achieve state-of-the-art results.

    \subsection{Online Continual Learning}
        The OCL task aims to acquire knowledge of new tasks from a data stream, with each sample being observed only once. A prominent solution to this task is provided by ER \cite{hayes2019ER}. It employs a strategy of storing samples from previous tasks and then randomly selects a subset of these samples as exemplars merged with new samples during the training of subsequent tasks. To select valuable samples from the memory, memory retrieval strategies such as the MIR \cite{aljundi2019MIR} and the ASER \cite{shim2021aser} are utilized. The SCR \cite{SCR_2021_CVPR} gathers samples from the same category closely together in the embedding space, while simultaneously distancing samples from dissimilar categories during replay-based training. The PCR \cite{PCR_2023_CVPR} couples the proxy-based and contrastive-based replay manners, and replaces the contrastive samples of anchors with corresponding proxies. \citet{OHO_Liu_AAAI2023} formulate the hyper-parameter optimization as an online Markov Decision Process. Imbalanced data in the transportation will exacerbate the problem of catastrophic forgetting in existing exemplar-free OCL methods.

    \subsection{CL with Large Pre-trained Models}
        \par Large pre-trained models bring backbone networks with strong feature representation ability to the CL. On the one hand, inspired by fine-tuning techniques in NLP \cite{P-Tuning_Lester_ACL2021, LoRA_Hu_ICLR2022,Dap-SiMT_Zhao_IJMLC2024}, the DualPrompt \cite{DualPrompt_Wang_ECCV2022}, the CODA-Prompt \cite{CODA-Prompt_Smith_CVPR2023}, and the MVP \cite{MVP-GCIL_Moon_ICCV2023} introduce prompts into CL, while the EASE \cite{EASE_Zhou_CVPR2024} introduces a distinct lightweight adapter for each new task, aiming to create task-specific subspace. On the other hand, the SimpleCIL \cite{SimpleCIL_Zhou_IJCV2024} shows that with the help of a simple incremental classifier and a frozen large pre-trained model as a feature extractor that can bring generalizable and transferable feature embeddings, it can surpass many previous CL methods. Thus, it is with great potential to combine the large pre-trained models with the CL approaches with a powerful incremental classifier, such as the SLDA \cite{SLDA_Hayes_CVPR2020} and the ACL methods \cite{zhuang2022acil, zhuang2023gkeal, RanPAC_McDonnell_NeurIPS2023, Zhuang_DSAL_AAAI2024}.

    \subsection{Data Imbalanced Continual Learning}
        \par The data imbalance issue is one of the most significant challenges in CL for autonomous driving. This imbalance can lead to models overlooking categories with fewer training samples and exacerbating the catastrophic forgetting issue. Several methods are proposed to address this, including the LUCIR \cite{LUCIR_Hou_CVPR2019}, the BiC \cite{BiC_Wu_CVPR2019}, PRS \cite{PRS_Kim_ECCV2020}, and the CImbL \cite{CImbL_He_CVPR2021}. They focus more on the imbalance issue in class incremental learning. The LST \cite{LST_Hu_CVPR2020} and the ActiveCIL \cite{ActiveCIL_Belouadah_ECCV2020} are designed for few-shot CL and active CL, respectively. \citet{LTCIL_Liu_ECCV2022} propose a two-stage learning paradigm, bridging the existing CL methods to imbalanced CL. The experiments conducted by them on long-tailed datasets inspire a series of subsequent works \cite{DRC_Chen_ICCV2023, CLAD_Xu_AAAI2024, DGR_He_CVPR2024, ISPC_Wang_CVPR2024, DAP_Hong_IJCAI2024, JIOC_Wang_IJCAI2024, APART_Qi_ML2024}. In OCL, the CBRS \cite{CBRS_Chrysakis_ICML2020} introduces a memory population approach for data balance, the CBA \cite{CBA_Wang_ICCV2023} proposes an online bias adapter, the LAS \cite{LAS_Huang_TMLR2024} introduces a logit adjust softmax to reduce inter-class imbalance, and the DELTA \cite{DELTA_Raghavan_CVPR2024} introduces a decoupled learning approach to enhance learning representations and address the substantial imbalance.

\section{Proposed Method}\label{sec:proposed_method}

    \begin{figure*}[!t]
        \centering
        \includegraphics[width=\linewidth]{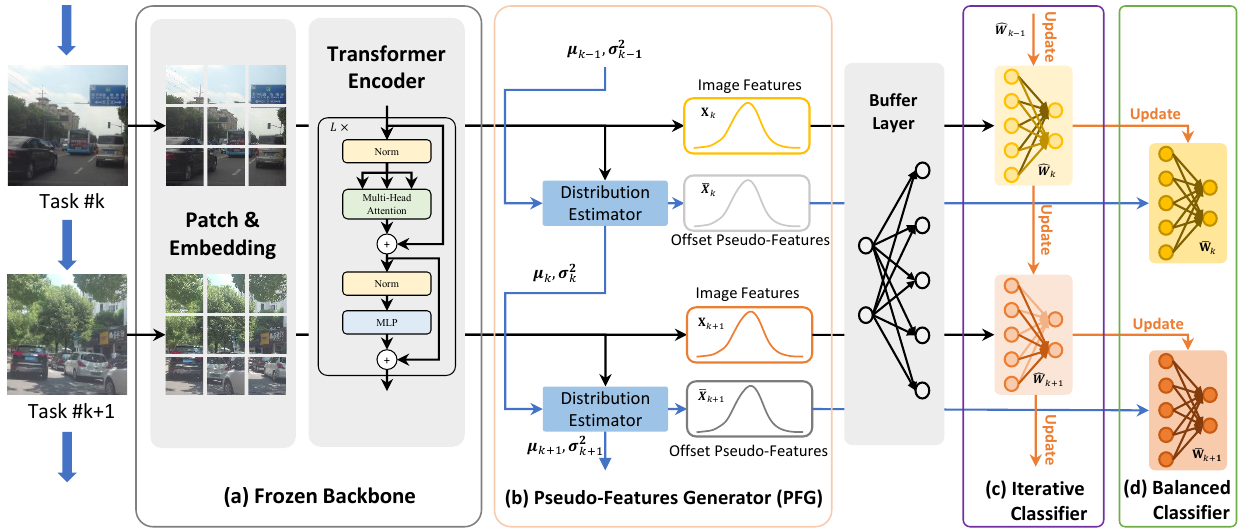}
        \caption{
            The training process of our proposed method includes:
            (\textbf{a}) a large universal frozen pre-trained backbone such as a ViT without its classification head;
            (\textbf{b}) a pseudo-features generator that estimates the mean and the variance of features recursively and generates the offset pseudo-features in an estimated normal distribution to balance the training data;
            (\textbf{c}) an iterative ridge regression classifier that iteratively updates its weight with real features only;
            (\textbf{d}) a balanced ridge regression classifier for inference that updates its weight from the iterative classifier using offset pseudo-features generated at each task.
        }\label{fig:flowchart}
    \end{figure*}

    \subsection{Overview}
        The AEF-OCL has 4 steps. Firstly, a frozen backbone is used to extract the features of the images. Secondly, we introduce a PFG module to solve the challenge of data imbalance. A frozen random initialized linear buffer layer is adopted to project the feature space into a higher one, making the feature suitable for ridge regression \cite{hoerl1970ridge}. Finally, we replace the original classification head of the model with a ridge regression classifier. As shown in Fig. \ref{fig:flowchart}, we train the ridge regression classifier recursively to classify the features obtained by the frozen backbone and the random buffer layer.

        To solve the problem caused by imbalanced data, the PFG module generates pseudo-features with their corresponding labels of each minor class (i.e., classes with less number of samples) to compensate for the imbalanced training samples. We assume that the feature distribution is normal. Hence, we estimate the mean and variance recursively and generate the offset pseudo-features in the same normal distribution as real features to balance the training dataset.

        The pseudo-features generated by the distribution estimator will subsequently entered into the same training process as those real samples. Notably, these generated features only influence the current classifier for inference, without updating the iterative classifiers. Thus, we can have a balanced classifier for the inference procedure. The pseudo-code of the overall training process is listed in Algorithm \ref{algo:overall}.

        \begin{algorithm}
            \caption{The training process of the AEF-OCL}\label{algo:overall}
            \begin{algorithmic}
                \Procedure{trainForOneBatch}{$\dataset{D}_k$}
                    \LComment{The $k$-th sample in the dataset $\dataset{D}_k$ is $(\Ximg, y)$.}
                    \ForAll {$(\Ximg, y, i) \in \dataset{D}_k$}
                        \LComment{Feature extraction}
                        \State $\f_i \gets f(\Ximg, \Wbackbone)$
                        \State $\Vec{x}_i \gets \ReLU\left(\f_i\Wrand\right)$
                        \State $\Vec{y}_i \gets \onehot(y)$

                        \LComment{Update statistics}
                        \State $n_{y} \gets n_{y} + 1$
                        \State $\MU^{(y)} \gets \frac{1}{n_y}\f_i + \frac{n_y - 1}{n_y}\MU^{(y)}$
                        \State ${\NU^{(y)}} \gets \frac{1}{n_y}\f_i^2 + \frac{n_y - 1}{n_y}{\NU^{(y)}}$
                        \State ${\SIGMA^{(y)}} \gets \sqrt{\frac{n_y}{n_y - 1}(\NU^{(y)} - {\MU^{(y)}}^2)}$
                    \EndFor
                    \State $\X_{k} \gets \begin{bmatrix}\Vec{x}_1\T & \Vec{x}_2\T & \cdots\end{bmatrix}\T$
                    \State $\Y_{k} \gets \begin{bmatrix}\Vec{y}_1\T & \Vec{y}_2\T & \cdots\end{bmatrix}\T$

                    \State \LComment{Train the iterative classifier}
                    \State $\Wh_k, \R_k \gets$ \Call{Update}{$\Wh_{k-1}$, $\R_{k-1}$, $\X_{k}$, $\Y_{k}$}

                    \State \LComment{Generate pseudo-features}
                    \State $n_{\text{max}} \gets \max\{n_0, n_1, \cdots, n_{C-1}\}$
                    \For{$c \gets 0$ \textbf{to} $C - 1$}
                        \For{$i \gets 1$ \textbf{to} $n_{\text{max}} - n_c$}
                            \State Sample $\overline{\f}_i$ from $\mathcal{N}(\MU^{(c)}, {\SIGMA^{(c)}}^2)$
                            \State $\overline{\Vec{x}}_i \gets \ReLU\left(\bar{\f_i}\Wrand\right)$
                            \State $\overline{\Vec{y}}_i \gets \onehot(c)$
                        \EndFor
                        \State $\overline{\X}_{k,c} \gets \begin{bmatrix}\Vec{\overline{x}}_1\T & \Vec{\overline{x}}_2\T & \cdots\end{bmatrix}\T$
                        \State $\overline{\Y}_{k,c} \gets \begin{bmatrix}\Vec{\overline{y}}_1\T & \Vec{\overline{y}}_2\T & \cdots\end{bmatrix}\T$
                    \EndFor
                    \State $\overline{\X}_k \gets \begin{bmatrix}\overline{\X}_{k,0}\T & \overline{\X}_{k,1}\T & \cdots & \overline{\X}_{k,C-1}\T\end{bmatrix}\T$
                    \State $\overline{\Y}_k \gets \begin{bmatrix}\overline{\Y}_{k,0}\T & \overline{\Y}_{k,1}\T & \cdots & \overline{\Y}_{k,C-1}\T\end{bmatrix}\T$

                    \State \LComment{Train the balanced classifier}
                    \State $\overline{\W}_k, \overline{\R}_k \gets $ \Call{Update}{$\Wh_{k}, \R_{k}, \overline{\X}_{k}, \overline{\Y}_{k}$}

                    \State \LComment{Use the balanced classifier for validation/inference}
                    \State \Call{Validate}{$\dataset{D}_{\text{val}}$, $\overline{\W}_k$}
                \EndProcedure
            \end{algorithmic}
        \end{algorithm}

    \subsection{Feature Extraction}
        \par Let $\dataset{D} = \{\dataset{D}_1, \dataset{D}_2, \dots, \dataset{D}_{K}\}$ of $C$ distinct classes be the overall training dataset with $K$ tasks that arrive phase by phase to train the model. For the dataset at the $k$-th task of size $N_k$, $\dataset{D}_k = \{(\Ximg_{k,1}, y_{k,1}), (\Ximg_{k,2}, y_{k,2}), \cdots, (\Ximg_{k,N_k}, y_{k,N_k})\}$ is the training set, where $\Ximg$ is an image tensor and $y$ is an integer ranging from $0$ to $C - 1$ that represents each distinct class.

        To utilize the power of pre-trained large models, we adopt a backbone network such as a ViT \cite{dosovitskiy2021image} to extract the features of images. Let
        \begin{equation}
            \f = f(\Ximg, \Wbackbone)
        \end{equation}
        be the features extracted by the backbone, where $\Wbackbone$ indicates the backbone weights. Then we use a linear layer of random weight $\Wrand$ followed by a ReLU activation inspired by various ACL methods \cite{zhuang2022acil, zhuang2023gkeal}, projecting the features into high dimension \cite{schmidt1992feed} as the input of the following classifier. The projected features $\bm{x}$ of shape $1 \times d$ can be defined as:
        \begin{equation}
            \Vec{x} = \ReLU\left(f(\Ximg, \Wbackbone)\Wrand\right).
        \end{equation}

    \subsection{Ridge Regression Classifier}
        \par To convert the classification problem into a ridge regression problem, we use the one-hot encoding to get target row vector $\Vec{y} = \operatorname{onehot}(y)$ of shape $1 \times C$. Thereby, we can represent each subset using two matrices $\dataset{D}_k \sim \{\X_k, \Y_k\}$ by stacking extracted feature vectors $\Vec{x}$ and target vectors $\Vec{y}$ vertically, where $\X_k \in \mathbb{R}^{N_k\times d}$ and $\Y_k \in \mathbb{R}^{N_k\times C}$.

        \par The training process of the ridge-regression classifier finds a weight matrix $\Wh_k \in \mathbb{R}^{d \times C}$ at the $k$-th task, linearly mapping the feature $\X_{1:k}$ to the label $\Y_{1:k}$
        \begin{equation}
            \Wh_k = \argmin{\W_k} \left(\SFN{\Y_{1:k} - \X_{1:k}\W_k} + \gamma \SFN{\W_k}\right),
        \end{equation}
        where $\gamma \ge 0$ is the coefficient of the regularization term and
        \begin{equation}
            \X_{1:k} = \begin{bmatrix}
                \X_{1} \\
                \X_{2} \\
                \vdots \\
                \X_{k} \\
            \end{bmatrix},\qquad
            \Y_{1:k} = \begin{bmatrix}
                \Y_{1} \\
                \Y_{2} \\
                \vdots \\
                \Y_{k} \\
            \end{bmatrix}.
        \end{equation}

        The optimal solution $\hat{\W_k} \in \mathbb{R}^{d \times C}$ is
        \begin{equation}\label{eq:W_i_optimial_soulution}
            \begin{split}
                \Wh_k &= (\X_{1:k}\T \X_{1:k} + \gamma \I)\inv \X_{1:k}\T \Y_{1:k} \\
                &= \left(\sum_{i=1}^{k}\X_{i}\T \X_{i} + \gamma \I\right)\inv \left(\sum_{i=1}^{k}\X_{i}\T \Y_{i}\right) \\
                &= \R_k\Q_k,
            \end{split}
        \end{equation}
        where $\R_k = (\sum_{i=1}^{K}\X_{i}\T \X_{i} + \gamma \I)\inv$ of shape $d \times d$ is a \textit{regularized feature autocorrelation matrix} and $\Q_k = \sum_{i=1}^{k}\X_{i}\T \Y_{i}$ of shape $d \times C$ is a \textit{cross correlation matrix}. $\R_k$ and $\Q_k$ capture the correlation information of $\X_{1:k}$ and $\Y_{1:k}$.

    \subsection{Continual Learning}
        Here, we give a recursive form of this analytical solution, which continually updates its weights online to obtain the same weights as training from scratch. This constructs a non-forgetting CL procedure.
        \begin{theorem}\label{thm:R_recursive}
            The calculation of the regularized feature autocorrelation matrix at task $k$, $\R_k = (\sum_{i=1}^{k}\X_{i}\T \X_{i} + \gamma \I)\inv$ is identical to its recursive form
            \begin{equation}
                \R_k = \R_{k - 1} - \R_{k - 1}\X_{k}\T(\I + \X_k\R_{k - 1}\X_{k}\T)\inv\X_{k}\R_{k - 1},
            \end{equation}
            where $\R_0 = \frac{1}{\gamma}\I$.
        \end{theorem}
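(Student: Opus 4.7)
The plan is to prove the recursion by direct application of the Woodbury matrix identity (equivalently, the matrix inversion lemma) after first writing $\R_k\inv$ in an incremental form.

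First I would expand the definition of $\R_k$ to isolate the contribution of the current phase's data. Unrolling the sum yields
\begin{equation}
\R_k\inv = \sum_{i=1}^{k}\X_i\T\X_i + \gamma\I = \left(\sum_{i=1}^{k-1}\X_i\T\X_i + \gamma\I\right) + \X_k\T\X_k = \R_{k-1}\inv + \X_k\T\X_k,
\end{equation}
which holds for $k \ge 1$ provided we adopt the base case $\R_0\inv = \gamma\I$, i.e.\ $\R_0 = \tfrac{1}{\gamma}\I$. This base case makes the $k=1$ instance reduce correctly to $\R_1\inv = \gamma\I + \X_1\T\X_1$.

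Next I would invoke the Woodbury identity $(A + UCV)\inv = A\inv - A\inv U(C\inv + VA\inv U)\inv VA\inv$ with the substitutions $A = \R_{k-1}\inv$, $U = \X_k\T$, $C = \I$, and $V = \X_k$. Plugging in gives exactly
\begin{equation}
\R_k = \R_{k-1} - \R_{k-1}\X_k\T\bigl(\I + \X_k\R_{k-1}\X_k\T\bigr)\inv\X_k\R_{k-1},
\end{equation}
which is the claimed recursion. The only technical point to check is that the inner matrix $\I + \X_k\R_{k-1}\X_k\T$ is invertible; this follows because $\X_k\R_{k-1}\X_k\T$ is positive semidefinite (since $\R_{k-1}$ is positive definite whenever $\gamma > 0$, as a sum of a PSD matrix and $\gamma\I$ is positive definite and so is its inverse), so $\I + \X_k\R_{k-1}\X_k\T$ has all eigenvalues at least $1$.

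The proof is essentially a one-line application of Woodbury, so I do not expect a serious obstacle. The only subtlety worth flagging is the correct choice of base case $\R_0 = \tfrac{1}{\gamma}\I$, which is what makes the recursion consistent at $k=1$ with the closed-form definition; I would state this base case explicitly and then verify by induction that the recursion preserves the identity $\R_k = (\sum_{i=1}^{k}\X_i\T\X_i + \gamma\I)\inv$ for all $k \ge 0$.
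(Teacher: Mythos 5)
Your proposal is correct and follows essentially the same route as the paper: write $\R_k\inv = \R_{k-1}\inv + \X_k\T\X_k$ and apply the Woodbury identity with $\Mat{A} = \R_{k-1}\inv$, $\Mat{U} = \X_k\T$, $\Mat{C} = \I$, $\Mat{V} = \X_k$. Your added remarks on the invertibility of $\I + \X_k\R_{k-1}\X_k\T$ and the consistency of the base case $\R_0 = \frac{1}{\gamma}\I$ are sound details the paper leaves implicit.
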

        \begin{proof}
            According to the Woodbury matrix identity \cite{WoodburyIdentity_Woodbury1950}, for conformable matrices $\Mat{A}$, $\Mat{U}$, $\Mat{C}$, and $\Mat{V}$, we have
            \begin{equation}
                (\Mat{A} + \Mat{U}\Mat{C}\Mat{V})\inv = \Mat{A}\inv - \Mat{A}\inv\Mat{U}(\Mat{C}\inv + \Mat{V}\Mat{A}\inv\Mat{U})\inv\Mat{V}\Mat{A}\inv.
            \end{equation}
            Let $\Mat{A} = \R_{k}\inv$, $\Mat{U} = \X_{k}\T$, $\Mat{V} = \X_k$, and $\Mat{C} = \I$, we have
            \begin{equation}
                \begin{split}
                    \R_{k} & = (\R_{k-1}\inv + \X_k\T\X_k)\inv \\
                    & = \R_{k - 1} - \R_{k - 1}\X_{k}\T(\I + \X_k\R_{k - 1}\X_{k}\T)\inv\X_{k}\R_{k - 1},
                \end{split}
            \end{equation}
            which completes the proof.
        \end{proof}

        \begin{theorem}\label{thm:W_recursive}
            The weight of iterative classifier $\Wh_k$ obtained by \eqref{eq:W_i_optimial_soulution} is identical to its recursive form
            \begin{equation}
                \Wh_k = (\I - \R_{k}\X_{k}\T \X_{k})\Wh_{k-1} + \R_{k}\X_{k}\T \Y_k,
            \end{equation}
            where $\Wh_0 = \Mat{0}_{d\times C}$ is a zero matrix.
        \end{theorem}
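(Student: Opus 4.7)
The plan is to start from the closed-form expression $\Wh_k = \R_k\Q_k$ from \eqref{eq:W_i_optimial_soulution}, where $\Q_k = \sum_{i=1}^{k}\X_{i}\T\Y_{i}$, and rewrite it in terms of $\Wh_{k-1}$. First I would split off the last summand to obtain $\Q_k = \Q_{k-1} + \X_k\T\Y_k$, so that
\begin{equation*}
\Wh_k = \R_k\Q_{k-1} + \R_k\X_k\T\Y_k.
\end{equation*}
The second term already matches the trailing term in the target identity, so the task reduces to showing $\R_k\Q_{k-1} = (\I - \R_k\X_k\T\X_k)\Wh_{k-1}$.

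To do that, I would rely on the additive update $\R_k\inv = \R_{k-1}\inv + \X_k\T\X_k$, which is immediate from the definition $\R_k = \bigl(\sum_{i=1}^{k}\X_i\T\X_i + \gamma\I\bigr)\inv$ and is the same identity that feeds into Theorem \ref{thm:R_recursive}. Since $\Wh_{k-1} = \R_{k-1}\Q_{k-1}$ by the inductive hypothesis, I can write $\Q_{k-1} = \R_{k-1}\inv \Wh_{k-1} = (\R_k\inv - \X_k\T\X_k)\Wh_{k-1}$. Left-multiplying by $\R_k$ collapses the first factor to the identity and yields $\R_k\Q_{k-1} = (\I - \R_k\X_k\T\X_k)\Wh_{k-1}$, which combines with the earlier step to give the stated recursion.

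The base case $\Wh_0 = \Mat{0}_{d\times C}$ is consistent with the empty sum $\Q_0 = \Mat{0}$ and $\R_0 = \frac{1}{\gamma}\I$ used in Theorem \ref{thm:R_recursive}, so no separate argument is needed there. No genuine obstacle appears in this derivation; the only thing to be careful about is that one does \emph{not} need to invoke the Woodbury expansion of $\R_k$ here. Theorem \ref{thm:R_recursive} and Theorem \ref{thm:W_recursive} are logically independent consequences of the additive identity $\R_k\inv = \R_{k-1}\inv + \X_k\T\X_k$; Theorem \ref{thm:R_recursive} converts that identity into an inverse-free update for $\R_k$, while Theorem \ref{thm:W_recursive} uses it directly to eliminate $\Q_{k-1}$ in favor of $\Wh_{k-1}$. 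Together they provide a fully recursive, exemplar-free update of the classifier weight.
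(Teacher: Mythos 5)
Your proof is correct, but it takes a genuinely shorter route than the paper's. The paper also begins by splitting $\Q_k = \Q_{k-1} + \X_k\T\Y_k$ to reduce the claim to $\R_k\Q_{k-1} = (\I - \R_k\X_k\T\X_k)\Wh_{k-1}$, but it then substitutes the Woodbury-expanded recursion for $\R_k$ from Theorem \ref{thm:R_recursive}, which leaves the gain matrix $\K_k = (\I + \X_k\R_{k-1}\X_k\T)\inv$ in the expression; it must then establish the auxiliary identity $\R_{k-1}\X_{k}\T\K_k = \R_k\X_k\T$ (via $\K_k = \I - \K_k\X_k\R_{k-1}\X_{k}\T$) before the result collapses to the desired form. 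You bypass all of that by writing $\Q_{k-1} = \R_{k-1}\inv\Wh_{k-1}$ and invoking the additive identity $\R_k\inv = \R_{k-1}\inv + \X_k\T\X_k$ directly, so that left-multiplying by $\R_k$ immediately yields $\R_k\Q_{k-1} = (\I - \R_{k}\X_{k}\T \X_{k})\Wh_{k-1}$; your remark that Theorems \ref{thm:R_recursive} and \ref{thm:W_recursive} are independent consequences of that additive identity is accurate. What the paper's longer route buys is the explicit RLS gain identity $\R_{k-1}\X_{k}\T\K_k = \R_k\X_k\T$, which ties the weight update to the inverse-free form of $\R_k$ actually computed in Algorithm \ref{algo:update}; what yours buys is brevity and the avoidance of any appeal to Woodbury. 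The only hypothesis you use beyond the paper's is the invertibility of $\R_{k-1}$, which holds by construction for $\gamma > 0$ and is equally required for the paper's initialization $\R_0 = \frac{1}{\gamma}\I$. One minor point: the recursion as printed in the theorem statement has $\Wh_{k}$ rather than $\Wh_{k-1}$ in the first term of the right-hand side; both your derivation and the paper's own proof (and Algorithm \ref{algo:update}) correctly treat it as $\Wh_{k-1}$.
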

        \begin{proof}
            \par According to
            \begin{equation}
                \Q_k = \sum_{i=1}^{k}\X_{i}\T \Y_{i} = \Q_{k - 1} + \X_{k}\T \Y_{k},
            \end{equation}
            \eqref{eq:W_i_optimial_soulution} can be derived to
            \begin{equation}\label{eq_appendix:W_k}
                \Wh_k = \R_k\Q_{k} = \R_k\Q_{k - 1} + \R_k \X_{k}\T \Y_{k}.
            \end{equation}
            \par According to Theorem \ref{thm:R_recursive},
            \begin{align}\label{eq_appendix:R_kQ_k-1}
            \R_k\Q_{k - 1}
            &= \R_{k - 1}\Q_{k - 1} - \R_{k - 1}\X_{k}\T\K_k\X_{k}\R_{k - 1}\Q_{k - 1} \nonumber\\
            &= (\I - \R_{k - 1}\X_{k}\T\K_k\X_{k})\Wh_{k - 1},
            \end{align}
            where $\K_k = (\I + \X_k\R_{k - 1}\X_{k}\T)\inv$ and $\K \in \mathbb{R}^{d \times d}$.
            \par Since
            \begin{equation}
                \K_k\K_k\inv = \K_k(\I + \X_k\R_{k - 1}\X_{k}\T) = \I,
            \end{equation}
            we have
            \begin{equation}
                \K_k = \I - \K_k\X_k\R_{k - 1}\X_{k}\T.
            \end{equation}
            \par Therefore,
            \begin{equation}
                \begin{split}
                &\R_{k - 1}\X_{k}\T\K_k = \R_{k - 1}\X_{k}\T(\I - \K_k\X_k\R_{k - 1}\X_{k}\T) \\
                &= (\R_{k - 1} - \R_{k - 1}\X_k\T\K_k\X_k\R_{k - 1})\X_{k}\T = \R_k\X_k\T,
                \end{split}
            \end{equation}
            which allows \eqref{eq_appendix:R_kQ_k-1} to be reduced to
            \begin{equation}\label{eq_appendix:R_kQ_K-1_recursive}
                \R_k\Q_{k - 1} = (\I - \R_{k}\X_{k}\T \X_{k})\Wh_{k}.
            \end{equation}
            \par Substituting \eqref{eq_appendix:R_kQ_K-1_recursive} into \eqref{eq_appendix:W_k} completes the proof.
        \end{proof}

        \par Notably, we calculate $\Wh_k$ using only data $\X_{k}$ and label $\Y_{k}$ at the $k$-th task, without involving any samples belonging to historical tasks like $\X_{k-1}$. Thus, our approach can be treated as an exemplar-free method. The pseudo-code of how it updates the weight of the classifier is listed in Algorithm \ref{algo:update}.

        \begin{algorithm}
            \caption{Update the weight of the classifier recursively}\label{algo:update}
            \begin{algorithmic}
                \Procedure{Update}{$\Wh_{k-1}$, $\R_{k-1}$, $\X_{k}$, $\Y_{k}$}
                    \State $\R_k \gets \R_{k - 1} - \R_{k - 1}\X_{k}\T(\I + \X_k\R_{k - 1}\X_{k}\T)\inv\X_{k}\R_{k - 1}$
                    \State $\Wh_k \gets (\I - \R_{k}\X_{k}\T \X_{k})\Wh_{k-1} + \R_{k}\X_{k}\T \Y_k$
                    \State \textbf{return} $\Wh_k$, $\R_k$
                \EndProcedure
            \end{algorithmic}
        \end{algorithm}

    \subsection{Pseudo-Features Generation}
        \par In the OCL process, the features of data extracted by backbone $\f$ come in a stream $\f_1, \f_2, \cdots, \f_n, \cdots$. We calculate the mean and variance of each different class. We can use the first $n$ samples of the same labels to evaluate the overall distribution of one object. We assume that the distribution of the features obtained by the backbone network follows the normal distribution and is pairwise independent.

        \par As data continue to arrive, our estimates of the feature distribution also evolve. Specifically, the mean and the variance can be updated recursively.

        The mean value of the features is calculated recursively by:
        \begin{equation}\label{eq:def_mean}
            \MU_n = \frac{1}{n}\sum_{i=1}^{n}\f_i = \frac{1}{n}\f_n + \frac{n - 1}{n}\MU_{n-1}.
        \end{equation}
        Similarly, there is also a recursive form of the square value:
        \begin{equation}
            \NU_n = \frac{1}{n}\sum_{i=1}^{n}\f_i^2 = \frac{1}{n}\f_n^2 + \frac{n - 1}{n}\NU_{n-1}.
        \end{equation}
        Using the mean value and the square value calculated recursively, we can get the estimation of feature variance:
        \begin{equation}
            \SIGMA^2_n = \frac{1}{n-1} \sum_{i=1}^{n}(\f_i - \MU_n)^2= \frac{n}{n - 1}(\NU_n - \MU_n^2).
        \end{equation}

        \par To address the issue of sample imbalance, we record the total count of samples from each category up to the current task. Subsequently, we offset the sample count of all categories to match that of the category with the most samples inspired by the oversampling methods \cite{SMOTE_Chawla_JAIR2002, LMLE_Huang_CVPR2016, OTOS_Yan_AAAI2019}. To do this, we recursively acquire the mean and variance of all current samples for each category and sample these compensatory samples randomly from the estimated normal distribution $\mathcal{N}(\MU_n, \SIGMA_n^2)$.

        \par For each different class, $\MU$ and $\SIGMA$ are usually different. Our method recursively calculates the values of $\MU$ and $\SIGMA$ for each class. We use $\MU^{(y)}$, $\NU^{(y)}$, and $\SIGMA^{(y)}$ to denote the mean, the mean square, and the standard deviation for the $y$-th class.

        \par These compensatory samples enter the same training process as if they were real samples, serving to update the classifier used solely for inference. Given the equivalence of our method for separate training and joint-learning, this process is equivalent to conducting complete analytical training for the full balanced data. Notably, the classifier in post-compensation learning is used only for the current task's inference, without influencing the $\R_{k}$ and $\Wh_{k}$ used in subsequent tasks.

    \subsection{Why AEF-OCL Overcomes Catastrophic Forgetting}
        \par For gradient-based methods, catastrophic forgetting can be attributed to the fundamental property named \textit{task-recency bias} \cite{LUCIR_Hou_CVPR2019} that predictions favor recently updated categories. This phenomenon is aggravated in driving scenarios with data imbalance, for example, when the data of new categories is much more than the data of old categories. To the authors' knowledge, no existing solutions exist for these gradient-based CL models to fully address catastrophic forgetting.

        \par As a branch of ACL, the AEF-OCL has the same \textit{absolute memorization property} \cite{zhuang2022acil} as other ACL methods. As indicated in Theorem \ref{thm:W_recursive}, the AEF-OCL recursively updates the weights of the classifier, which is identical to the weight directly learned on the joint dataset. This so-called \textit{weight-invariant property} gives AEF-OCL the same absolute memorization property as other ACL methods.

        \par Compared with other ACL methods, the AEF-OCL solves the data imbalance problem for the first time. Although the existing ACL methods solve catastrophic forgetting, their classifiers still suffer from data imbalance. The AEF-OCL eliminates the discrimination of the classifier caused by data imbalance, which makes it superior to other ACL methods in data imbalance scenarios such as autonomous driving.


\section{Experiments}\label{sec:experiments}
    In this section, we validate the proposed AEF-OCL by experimenting with it on the SODA10M \cite{han2021soda10m} dataset.
    \subsection{Introduction to the SODA10M Dataset}
        The SODA10M dataset is a large-scale self/semi-supervised object detection dataset for autonomous driving. It comprises 10 million unlabeled images and 20,000 labeled images with 6 representative object categories. The dataset's distribution is graphically represented in Fig. \ref{fig:counts} upon examination, showing that the dataset exhibits an imbalanced categorization. \textit{Car} constitutes a significant proportion, representing 55\% of the total dataset. Conversely, \textit{Tricycle} comprises a minuscule fraction, accounting for only 0.3\% of the overall data.
        \begin{figure}[!ht]
            \centering
            \includegraphics[width=\linewidth]{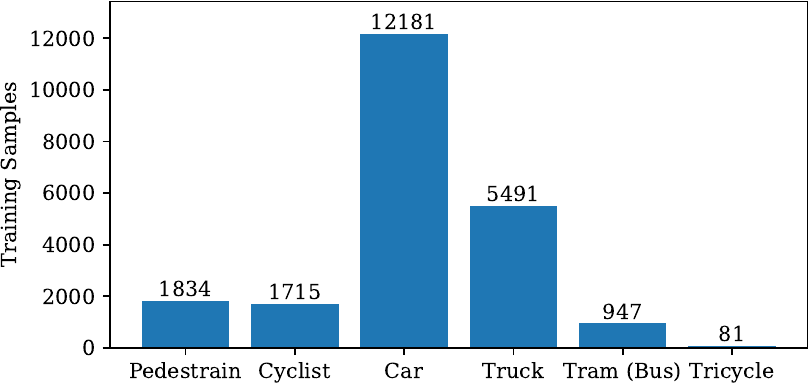}
            \caption{The number of training samples of each class.}
            \label{fig:counts}
            \vspace{-1em}
        \end{figure}

    \subsection{Evaluation Metric}
        \par Following the evaluation index proposed by the SODA10M paper \cite{han2021soda10m}, we use the \textit{average mean class accuracy} (AMCA) to evaluate our model. The AMCA is defined as:
        \begin{equation}
            AMCA = \frac{1}{T} \sum_{t} \frac{1}{C}\sum_{c}a_{c,t},
        \end{equation}
        where $a_{c,t}$ is the accuracy of class $c$ at task $t$.

        \par This metric is not affected by the number of samples in the training set. Categories with a few samples and those with numerous samples have equal weight in this metric. This indicator requires the model to have a considerable classification accuracy for both majority and minority classes. That is, the non-discrimination of the model.

    \subsection{Result Comparison}
        \par We perform our experiments on the SODA10M dataset. To utilize large models to obtain features that are easy to classify, we use the ViT-large/16 \cite{dosovitskiy2021image}, a ViT with $16 \times 16$ input patch size of 304.33M parameters and 61.55 GFLOPS, pre-trained on ImageNet-1k \cite{ImageNet_Deng_CVPR2009} provided by TorchVision \cite{torchvision2016} as a common backbone. For training details of the comparative methods, we use SGD for one epoch. We set the learning rate as 0.1 with a batch size of 10 and set both the momentum and the weight decay as 0. We use its generalized implementation of existing ACL methods introduced by \citet{GACL_Zhuang_NeurIPS2024}. For the ACIL, the DS-AL, and our AEF-OCL, we use the same random buffer of size 8192. For the replay-based methods, we set the memory size, the maximum number of images allowed to store, to 1000. Results are shown in TABLE \ref{tab:result_comparation}.

        \begin{table}[!ht]
            \centering
            \caption{The AMCA of ours and typical OCL methods}\label{tab:result_comparation}
            \begin{tblr}{X[c, m]Q[c, m, 0.25\linewidth]Q[c, m, 0.25\linewidth]}
                \toprule
                \textbf{Method}                    & \textbf{Memory Size} & \textbf{AMCA} (\%) \\ \midrule
                AGEM \cite{Chaudhry_AGEM_ICLR2019} & 1000                 & 41.61              \\
                EWC \cite{EWC2017nas}              & 0                    & 51.60              \\
                ACIL \cite{zhuang2022acil}         & 0                    & 55.01              \\
                DS-AL \cite{Zhuang_DSAL_AAAI2024}  & 0                    & 55.64              \\
                GKEAL \cite{zhuang2023gkeal}       & 0                    & 56.75              \\
                LwF \cite{li2018LWF}               & 0                    & 61.02              \\
                \SetRow{gray!10} \textbf{AEF-OCL}  & 0                    & \textbf{66.32}     \\ \bottomrule
            \end{tblr}
        \end{table}

        As indicated in TABLE \ref{tab:result_comparation}, among the exemplar-free methods, the AEF-OCL gives a superior performance (i.e., 66.32\% for AMCA). Other OCL techniques, such as the ACIL, perform less ideally (e.g., 55.01\%). There are two possible causes. First, methods such as the ACIL deal with incremental learning where data categories during training are mutually exclusive. On the SODA10M dataset, data categories usually appear jointly, allowing an easier CL operation. The other cause lies in the imbalance issue. This dataset is highly imbalanced, e.g., the \textit{Car}/\textit{Tricycle} categories have 55\%/0.3\% data distribution.

        The replay-based method AGEM exhibits comparatively lower precision (e.g., 41.61\%). This discrepancy could potentially be attributed to that the AGEM is based on a class-incremental paradigm. However, each training task in SODA10M could contain data of all categories, contradicting the AGEM training paradigm. Moreover, the imbalanced issue in OCL is also not properly treated in AGEM.

    \subsection{The Distribution of Features}
        The PFG module is set up on the assumption that the features obtained from the backbone roughly obey the normal distribution. To verify this, we use kernel density estimation \cite{KDE_Parzen_AMOS1956} to visualize the features. We can find from Fig. \ref{fig:distribution_class} that the features of different categories roughly follow a normal distribution with different means and variances.
        \begin{figure}[!ht]
            \centering
            \includegraphics[width=\linewidth]{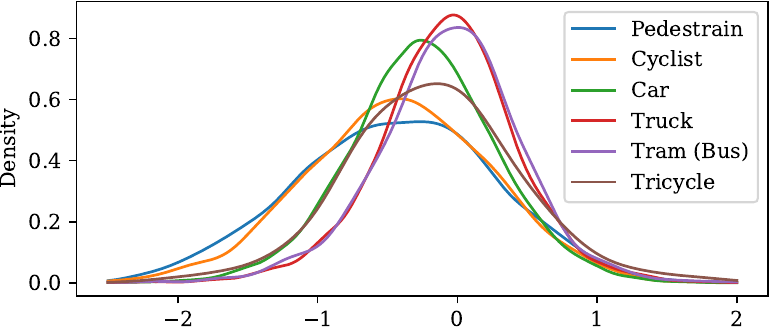}
            \caption{Distributions of the first element of features of different classes.}
            \label{fig:distribution_class}
        \end{figure}

        In addition, we plot the distribution of the features in a specific category (e.g., the \textit{Car} category in Fig. \ref{fig:distribution_feature}) and find that different feature elements of the same class also obey normal distribution with different means and variances, which verifies the assumption that the feature distribution is normal.

        \begin{figure}[!ht]
            \centering
            \includegraphics[width=\linewidth]{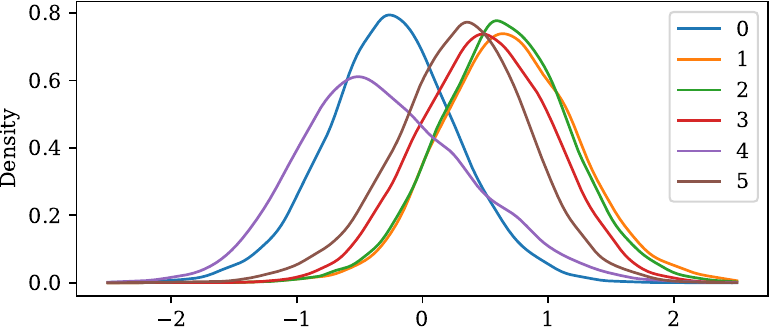}
            \caption{Distributions of the first 6 elements of features of the \textit{Car} class.}
            \label{fig:distribution_feature}
        \end{figure}

    \subsection{Why Not Update From Balanced Classifier}
        \par We use pseudo-samples (i.e., pseudo-features with their labels) to balance the weights of the classifier. During the online training, the previous pseudo-features of pseudo-samples may not accurately reflect the distribution of the overall data. Therefore, we retain the imbalanced iterative classifier, which is recursively trained on the features and labels from real data only. A balanced classifier is incrementally updated from the iterative classifier by the pseudo-samples for inference. In addition, this update strategy helps the AEF-OCL keep the same \textit{weight-invariant property} as the other ACL methods.

        The experiment in Fig. \ref{fig:strategies} shows that invariant to the value of the regularization term $\gamma$, updating from the iterative classifier has a higher AMCA than updating from the balanced classifier.
        \begin{figure}[!ht]
            \centering
            \includegraphics[width=\linewidth]{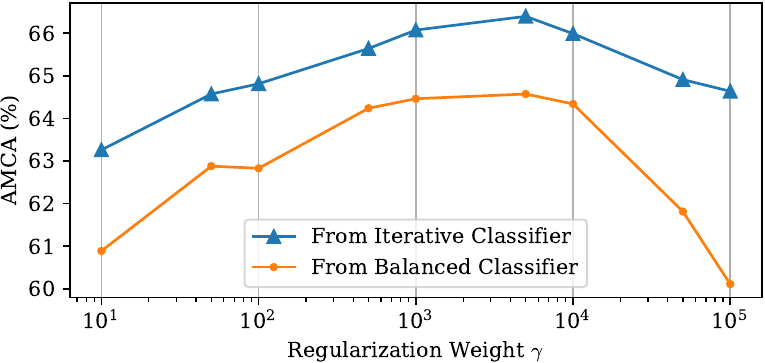}
            \caption{Different update strategies on different regularization weight.}
            \label{fig:strategies}
            \vspace{-1em}
        \end{figure}

    \subsection{Identical Distribution, Better Generator}
        \par It is important for the PFG module to generate pseudo-features with the same distribution as the real features. To show this, we introduce the noise coefficient $\alpha$, using $(\alpha\SIGMA)^2$ as the sampling variance, and study the impact of the PFG sampling strategy on the results. As shown in Fig. \ref{fig:noise}, when $\alpha$ is near 1, the AMCA is the highest, while other values encounter performance reduction. That is, when the estimation of $\SIGMA$ is correct, it benefits the algorithm. Otherwise, it will influence the performance to the extent proportional to the gap between the estimate and the ideal distribution.
        \begin{figure}[!ht]
            \centering
            \includegraphics[width=\linewidth]{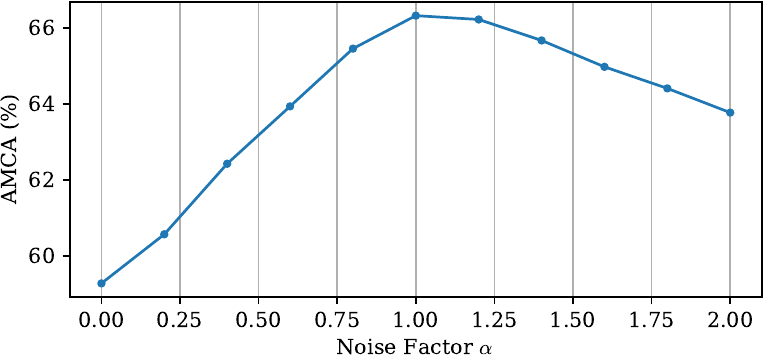}
            \caption{The AMCA on different noise factors.}
            \label{fig:noise}
            \vspace{-1em}
        \end{figure}

\section{Limitations and Future Works}
    \par The AEF-OCL needs a large-scale pre-trained backbone with powerful generalization ability. Online scenarios make it hard to adapt the backbone network to traffic datasets. This could motivate the exploration of adjustable backbones online.
    \par In addition, the high safety requirements of autonomous driving require us to explore security issues. Whether the AEF-OCL is robust enough to defend against attacks and whether the pseudo-features generated by the PFG module can enhance the robustness deserve further exploration.

\section{Conclusion}\label{sec:conclusions}
    \par In this paper, we have introduced the AEF-OCL, an OCL approach for imbalanced autonomous driving datasets based on a large-scale pre-trained backbone. Our method uses ridge regression as a classifier to solve the OCL problem in transportation by recursively calculating its analytical solution, establishing an equivalence between the CL and its joint-learning counterpart. Our AEF-OCL eliminates the need for historical samples, addresses privacy issues, and ensures data privacy. Furthermore, we have introduced the PFG module, which effectively combats data imbalance by generating pseudo-data through recursive distribution calculations on task-specific data. Experiments on the SODA10M dataset have validated the competitive performance of AEF-OCL in addressing OCL challenges associated with autonomous driving.

\section*{Acknowledgments}
This research was supported by
the Fundamental Research Funds for the Central Universities (2023ZYGXZR023, 2024ZYGXZR074),
the National Natural Science Foundation of China (62306117, 62406114, U23A20317),
the Guangzhou Basic and Applied Basic Research Foundation (2024A04J3681, 2023A04J1687),
the South China University of Technology-TCL Technology Innovation Fund,
the Guangdong Basic and Applied Basic Research Foundation (2024A1515010220), and
the CAAI-MindSpore Open Fund developed on Openl Community.

\printbibliography

\vspace{-3em}

\begin{IEEEbiography}
[{\includegraphics[width=1in,clip,keepaspectratio]{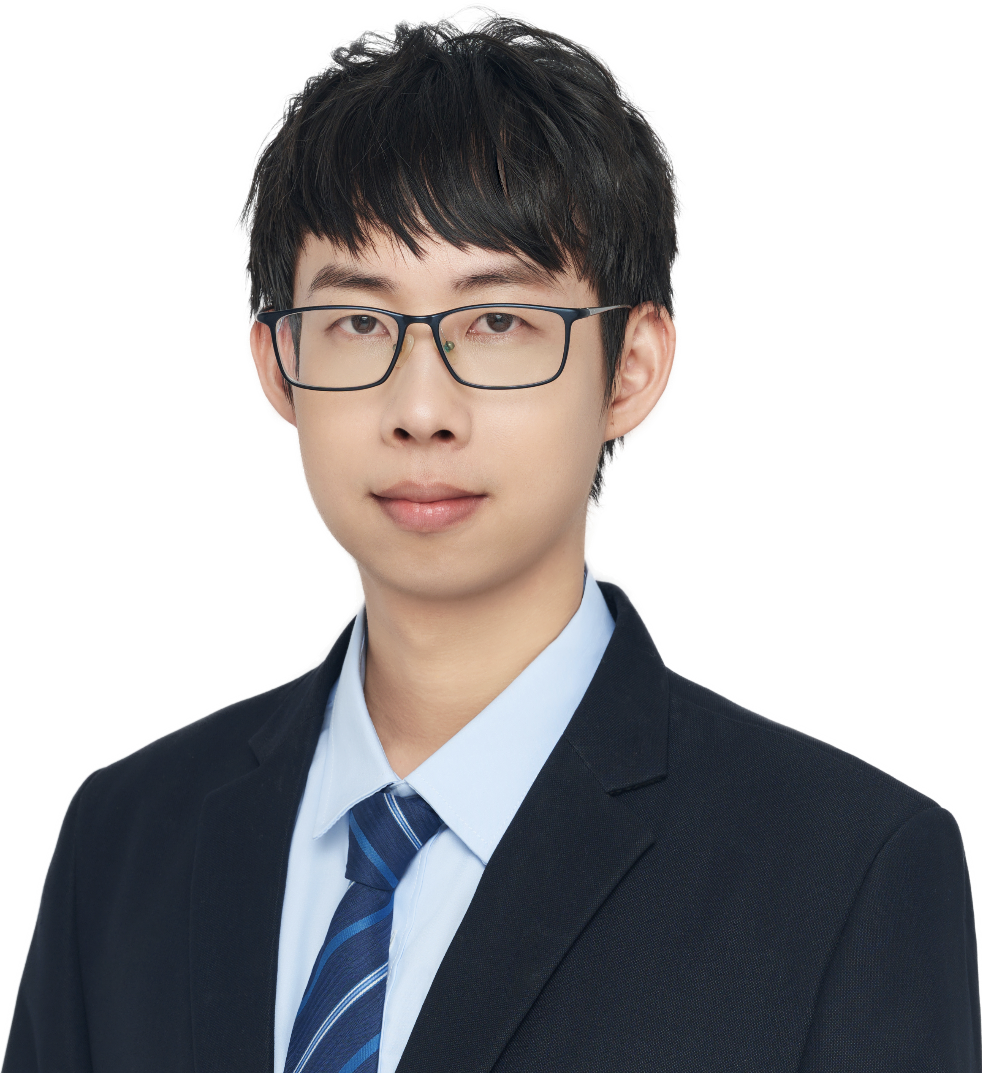}}]{Huiping Zhuang}
received B.S. and M.E. degrees from the South China University of Technology, Guangzhou, China, in 2014 and 2017, respectively, and the Ph.D. degree from the School of Electrical and Electronic Engineering, Nanyang Technological University, Singapore, in 2021.
\par He is currently an Associate Professor with the Shien-Ming Wu School of Intelligent Engineering, South China University of Technology. He has published more than 40 papers, including those in ICML, NeurIPS, CVPR, IEEE TNNLS, IEEE TSMC-S, and IEEE TGRS. He has served as a Guest Editor for Journal of Franklin Institute. His research interests include deep learning, AI computer architecture, and intelligent robots.
\end{IEEEbiography}

\begin{IEEEbiography}
[{\includegraphics[width=1in,clip,keepaspectratio]{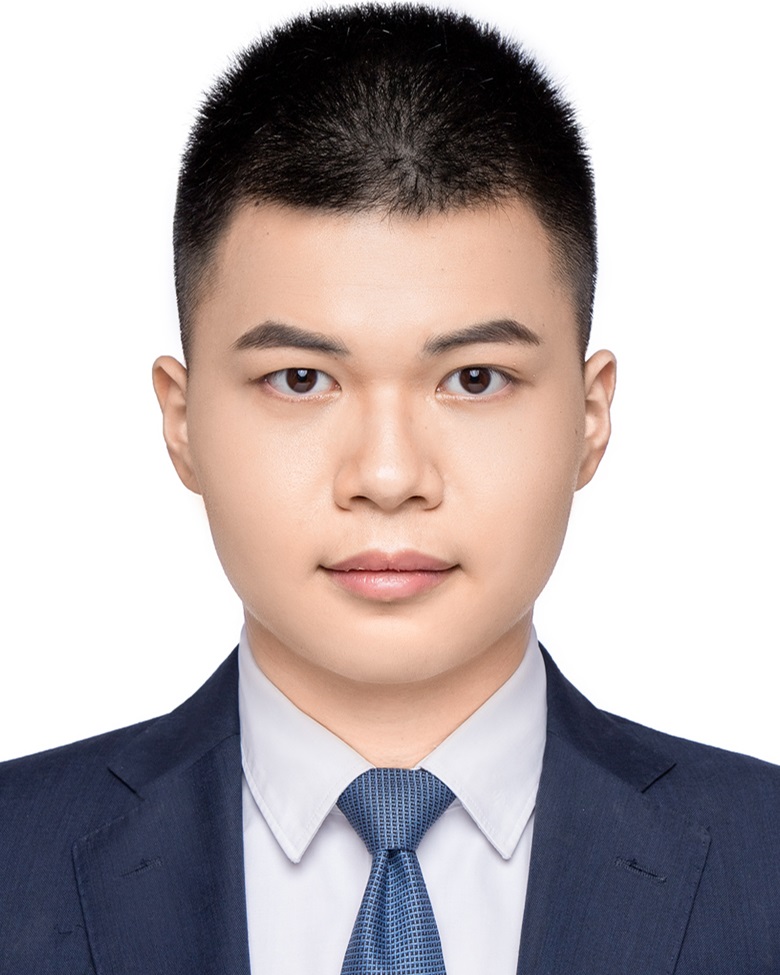}}]{Di Fang}
is an undergraduate student at the South China University of Technology. His research interests include machine learning and continual learning.
\end{IEEEbiography}

\vspace{-1.75em}

\begin{IEEEbiography}
[{\includegraphics[width=1in,clip,keepaspectratio]{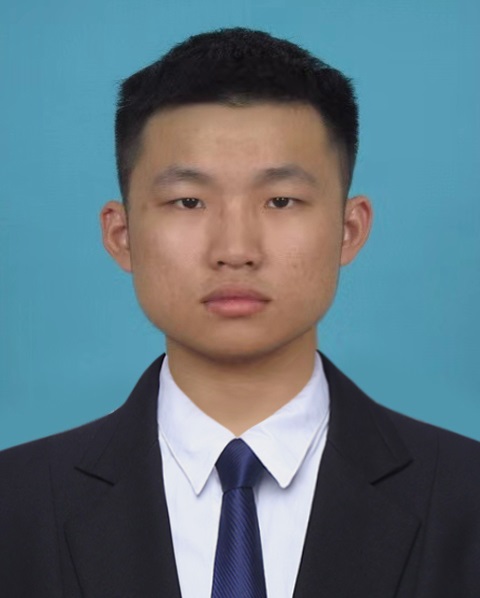}}]{Kai Tong}received the B.E. degree in the School of Automation, University of Electronic Science and Technology of China, and received the M.S. degree in University of Massachusetts Amherst.
\par He is currently studying for a Ph.D. degree in the Shien-Ming Wu School of Intelligent Engineering, South China University of Technology. His research interests include continual learning and large language models.
\end{IEEEbiography}

\vspace{-1.75em}

\begin{IEEEbiography}
[{\includegraphics[width=1in,clip,keepaspectratio]{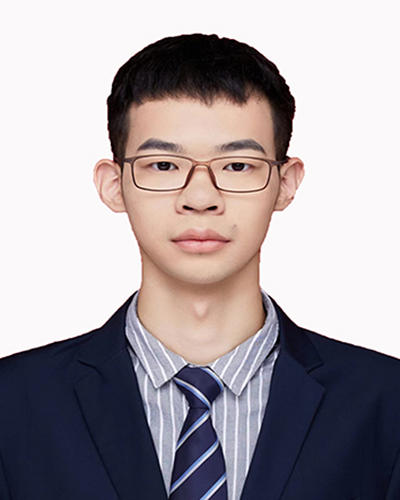}}]{Yuchen Liu}
received the B.E. degree in the Shien-Ming Wu School of Intelligent Engineering, South China University of Technology.
\par He is currently studying Master of Science program in the Department of Mechanical Engineering, The University of Hong Kong. His research interests include continual learning and deep learning.
\end{IEEEbiography}

\vspace{-1.75em}

\begin{IEEEbiography}
[{\includegraphics[width=1in,clip,keepaspectratio]{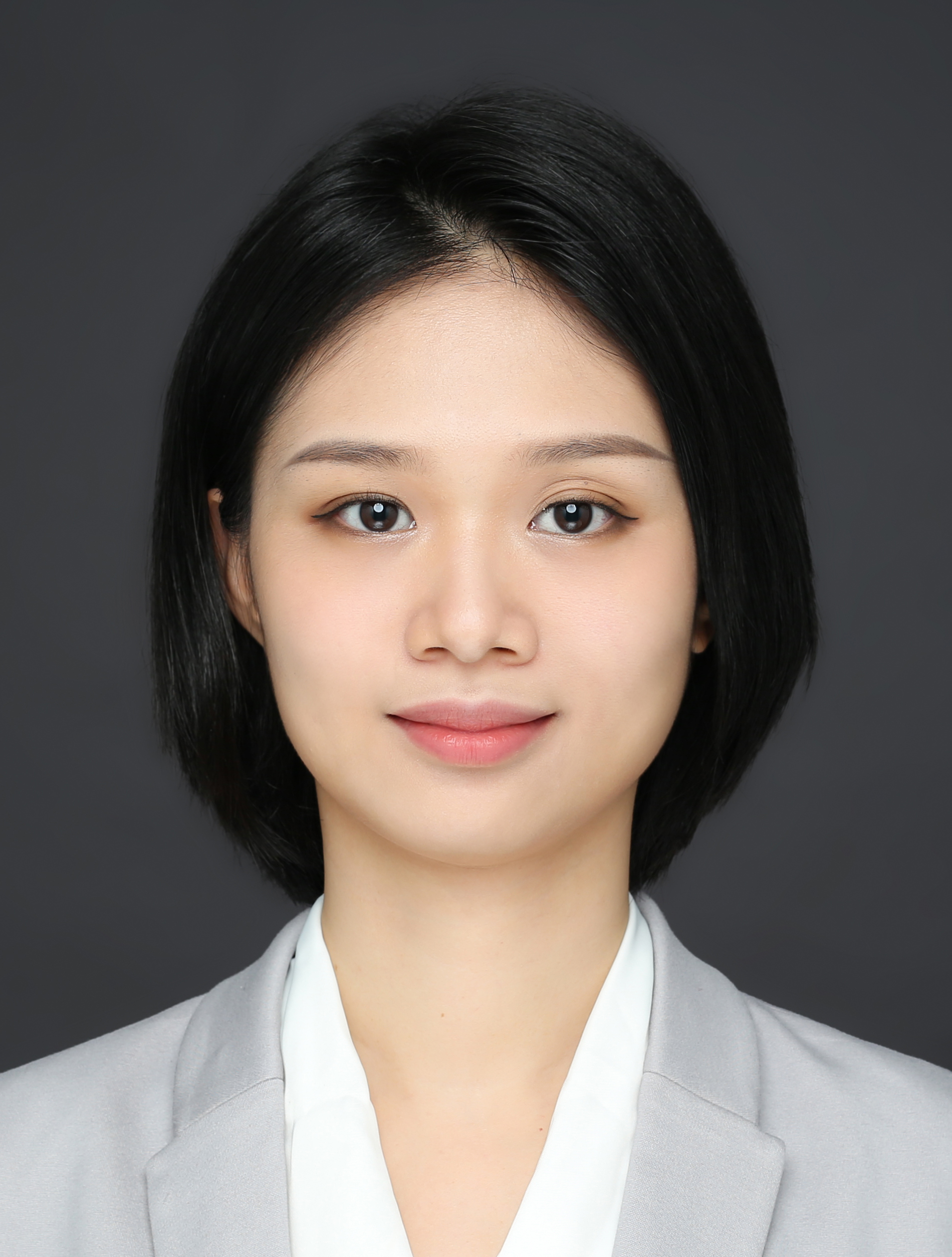}}]{Ziqian Zeng}
    obtained her Ph.D. degree in Computer Science and Engineering from The Hong Kong University of Science and Technology in 2021.
    \par She is currently an Associate Professor at the Shien-Ming Wu School of Intelligent Engineering, South China University of Technology. Her research interests include efficient inference, zero-shot learning, fairness, and privacy.
\end{IEEEbiography}

\vspace{-1.75em}

\begin{IEEEbiography}
[{\includegraphics[width=1in,clip,keepaspectratio]{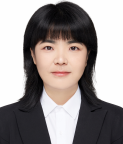}}]{Xu Zhou}
    is currently a professor with the Department of Information Science and Engineering, Hunan University, Changsha, China.
    \par She received the Ph.D. degree from the College of Computer Science and Electronic Engineering, Hunan University, in 2016. Her research interests include parallel computing, data management and spatial crowdsourcing.
\end{IEEEbiography}

\vspace{-1.75em}

\begin{IEEEbiography}
[{\includegraphics[width=1in,clip,keepaspectratio]{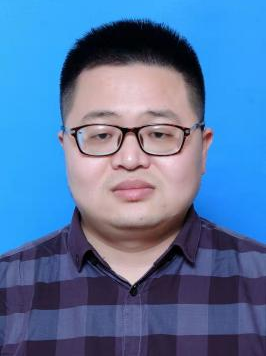}}]{Cen Chen}
received the Ph.D. degree in computer science from Hunan University, Changsha, China, in 2019. He previously worked as a Scientist with Institute of Infocomm Research (I2R), Agency for Science, Technology and Research (A*STAR), Singapore.
\par He currently works as a professor at the school of Future Technology of South China University of Technology and the Shenzhen Institute of Hunan University. His research interest includes parallel and distributed computing, machine learning and deep learning. He has published more than 60 articles in international conferences and journals on machine learning algorithms and parallel computing, such as HPCA, DAC, IEEE TC, IEEE TPDS, AAAI, ICDM, ICPP, and ICDCS. He has served as a Guest Editor for Pattern Recognition and Neurocomputing.
\end{IEEEbiography}

\end{document}